\def\eqref#1{equation~\ref{#1}}
\def\1{\bm{1}}
\DeclareMathAlphabet{\mathsfit}{\encodingdefault}{\sfdefault}{m}{sl}
\SetMathAlphabet{\mathsfit}{bold}{\encodingdefault}{\sfdefault}{bx}{n}
\definecolor{mygreen}{rgb}{0,0.6,0}
\newsavebox{\codebox}
\definecolor{Green}{RGB}{39,174,96}
\definecolor{Red}{RGB}{200,50,50}
\definecolor{Gray}{RGB}{105,105,105}
\newtheorem{thm}{Theorem}[section]
\title{Conditional Negative Sampling for Contrastive Learning of Visual Representations}
\author{
  Mike Wu$^1$, Milan Mosse$^{1,3}$, Chengxu Zhuang$^2$, Daniel Yamins$^{1,2}$, Noah Goodman$^{1,2}$ \\
  Department of Computer Science$^1$, Psychology$^2$, and Philosophy$^3$ \\
  Stanford University \\
  \texttt{\{wumike, chengxuz, mmosse19, yamins, ngoodman\}@stanford.edu}
}
\begin{document}

\maketitle

\begin{abstract}
Recent methods for learning unsupervised visual representations, dubbed contrastive learning, optimize the noise-contrastive estimation (NCE) bound on mutual information between two views of an image. NCE uses randomly sampled negative examples to normalize the objective. In this paper, we show that choosing difficult negatives, or those more similar to the current instance, can yield stronger representations. To do this, we introduce a family of mutual information estimators that sample negatives conditionally -- in a ``ring'' around each positive. We prove that these estimators lower-bound mutual information, with higher bias but lower variance than NCE. Experimentally, we find our approach, applied on top of existing models (IR, CMC, and MoCo) improves accuracy by 2-5\% points in each case, measured by linear evaluation on four standard image datasets. Moreover, we find continued benefits when transferring features to a variety of new image distributions from the Meta-Dataset collection and to a variety of downstream tasks such as object detection, instance segmentation, and keypoint detection.
\end{abstract}

\section{Introduction}
Supervised learning algorithms have given rise to human-level performance in several visual tasks \citep{russakovsky2015imagenet,redmon2016you,he2017mask}, relying heavily on large image datasets paired with  semantic annotations. These annotations vary in difficulty and cost, spanning from simple class labels \citep{deng2009imagenet} to more granular descriptions like bounding boxes \citep{everingham2010pascal,lin2014microsoft} and key points \citep{lin2014microsoft}. As it is impractical to scale high quality annotations to the size that modern deep learning demands, this reliance on supervision poses a barrier to widespread adoption.
In response, we have seen the growth of \textit{un}-supervised approaches to learning representations, or embeddings, that are general and can be re-used for many tasks at once. In natural language processing, this approach has been highly successful, resulting in the popular GPT \citep{radford2018improving,radford2019language,brown2020language} and BERT \citep{devlin2018bert,liu2019roberta} models. While supervised pretraining is still dominant in computer vision, recent approaches using ``contrastive'' objectives, have sparked great interest from the research community \citep{wu2018unsupervised,oord2018representation,hjelm2018learning,zhuang2019local,henaff2019data,misra2020self,he2019momentum,chen2020simple,chen2020improved,grill2020bootstrap}.
In the last two years, contrastive methods have already achieved remarkable results, quickly closing the gap to supervised methods \citep{he2016deep,he2019momentum,chen2020simple,grill2020bootstrap}.

Recent contrastive algorithms were developed as estimators of mutual information \citep{oord2018representation,hjelm2018learning,bachman2019learning}, building on the intuition that a good low-dimensional ``representation'' would be one that linearizes the useful information embedded within a high-dimensional data point. In the visual domain, these estimators optimize an encoder by maximizing the
similarity of encodings for two augmentations (i.e.~transformations) of the same image.
Doing so is trivial unless this similarity function is normalized. This is done by using ``negative examples'', contrasting an image (e.g. of a cat) with a set of possible other images (e.g. of dogs, tables, cars, etc.). We hypothesize that the manner in which we choose these negatives greatly effects the quality of the representations. For instance, differentiating a cat from other breeds of cats is visually more difficult than differentiating a cat from other classes. The encoder may thus have to focus on more granular, semantic information (e.g. fur patterns) that may be useful for downstream visual tasks (e.g.~object classification). While research in contrastive learning has explored architectures, augmentations, and pretext tasks, there has been little attention given to how one chooses negative samples beyond the common tactic of uniformly sampling from the training dataset.

While choosing more difficult negatives seems promising, there are several unanswered theoretical and practical questions.
Naively choosing difficult negatives may yield an objective that no longer bounds mutual information.
Since such bounds are the basis for many contrastive objectives, and have been used for choosing good augmentations \citep{tian2020makes} and other innovations, it is desirable to use harder negatives without losing this property.
Moreover, even if choosing difficult negatives is theoretically justified, we do not know if it will yield representations better for downstream tasks.
In this paper, we present a new family of estimators that supports sampling negatives from a particular class of conditional distributions.
We then prove that this family remains a lower bound of mutual information.
Moreover, we show that while they are a looser bound than the well-known noise contrastive estimator, estimators in this family have lower variance.
We propose a particular method, the Ring model, within this family for choosing negatives that are close, but not too close, to the positive example.
We then apply Ring to representation learning, where it is straightforward to adjust state-of-the-art contrastive objectives (e.g. MoCo, CMC) to sample harder negatives.
We find that Ring negatives improve transfer performance across datasets and across underlying objectives, making this an easy and useful addition to contrastive learning methods.

\section{Background}
Recent contrastive learning has focused heavily on exemplar-based objectives, where examples, or instances, are compared to one another to learn a representation.
Many of these exemplar-based losses \citep{hjelm2018learning,wu2018unsupervised,bachman2019learning,zhuang2019local,tian2019contrastive,chen2020simple} are equivalent to noise contrastive estimation, or NCE \citep{gutmann2010noise,oord2018representation,poole2019variational}, which is a popular lower bound on the mutual information, denoted by $\mathcal{I}$, between two random variables. This connection is well-known and stated in several works \citep{chen2020simple,tschannen2019mutual,tian2020makes}, as well as explicitly motivating several algorithms (e.g. Deep InfoMax \citep{hjelm2018learning,bachman2019learning}), and choices of image views \citep{tian2020makes}. To review, recall the NCE objective:
\begin{equation}
    \mathcal{I}(U; V) \geq \mathcal{L}_{\text{NCE}}(u_i,v_i) = \mathbf{E}_{u_i,v_i \sim p(u,v)}\mathbf{E}_{v_{1:k} \sim p(v)} \left[ \log \frac{e^{f_\theta(u_i,v_i)}}{\frac{1}{k+1}\sum_{j\in\{i,1:k\}} e^{f_\theta(u_i,v_j)}} \right]
\label{eq:nce}
\end{equation}
where $u,v$ are realizations of two random variables of interest, $U$ and $V$. We call $v_{1:k} = \{v_1, \ldots v_k \}$ ``negative examples''
that normalize the numerator with respect to other possible realizations of $V$. A proof of the inequality in Eq.~\ref{eq:nce} can be found in \cite{poole2019variational}.

Now, suppose $U$ and $V$ are derived from the same random variable $X$, and we are given a dataset $\mathcal{D} = \{ x_i \}_{i=1}^n$ of $n$ values that $X$ can take, sampled from a distribution $p(x)$. Define $\mathcal{T}$ as a family of functions where each member $t: X \rightarrow X$ maps one realization of $X$ to another. We call a transformed input $t(x)$ a ``view'' of $x$.
In vision, $t \in \mathcal{T}$ is user-specified to be a composition of cropping, adding color jitter, gaussian blurring, among many others \citep{wu2018unsupervised,bachman2019learning,chen2020simple}. The choice of view family is a primary determinant of how successful a contrastive algorithm is \citep{tian2020makes,tian2019contrastive,chen2020simple}. Finally, let $p(t)$ denote a distribution over $\mathcal{T}$ from which we can sample, a common choice being uniform over $\mathcal{T}$.

Next, introduce an encoder $g_\theta: X \rightarrow \mathbf{R}^d$ that maps an instance to a representation. Then, a general  contrastive objective for the $i$-th example in $\mathcal{D}$ is:
\begin{equation}
    \mathcal{L}(x_i) = \mathbf{E}_{t,t',t_{1:k} \sim p(t)} \mathbf{E}_{x_{1:k} \sim p(x)}\left[ \log \frac{e^{g_\theta(t(x_i))^T g_\theta(t'(x_i)) / \tau}}{\frac{1}{k+1} \sum_{j \in \{i,1:k\}} e^{g_\theta(t(x_i))^T g_\theta(t_j(x_j)) / \tau}} \right]
\label{eq:contrastive}
\end{equation}
where $\tau$ is a temperature used to scale the dot product. The equivalence to NCE is immediate given $f_\theta(u,v) = g_\theta(u)^T g_\theta(v) / \tau$.
We can interpret maximizing Eq.~\ref{eq:contrastive} as
choosing an embedding that pulls two views of the same instance together while pushing two views of distinct instances apart. As a result, the learned representation is invariant to the transformations in $\mathcal{T}$. The output of $g_\theta$ is L$_2$ normalized to prevent trivial solutions. That is, it is optimal to uniformly disperse representations across the surface of the hypersphere.
A drawback to NCE, and consequently to this class of contrastive objectives, is that the number of negative examples $k$ must be large to faithfully approximate the true partition function.
However, the size of $k$ in Eq.~\ref{eq:contrastive} is limited by compute and memory when optimizing. Thus, recent innovations have focused on tackling this challenge.

Instance Discrimination \citep{wu2018unsupervised}, or IR, introduces a memory bank $M$ to cache embeddings of each $x_i \in \mathcal{D}$. Since every epoch we observe each instance once, the memory bank will save the embedding of the view of $x_i$ observed last epoch in its $i$-th entry. Then, the objective is:
\begin{equation}
    \mathcal{L}_{\text{IR}}(x_i; M) = \mathbf{E}_{t \sim p(t)} \mathbf{E}_{j_{1:k} \sim U(1,n)} \left[ \log \frac{e^{g_\theta(t(x_i))^T M[i] / \tau}}{\frac{1}{k+1}\sum_{j\in \{i,j_{1:k}\}} e^{g_\theta(t(x_i))^T M[j] / \tau}} \right]
\label{eq:ir}
\end{equation}
where $M[i]$ represents fetching the $i$-th entry in $M$, and $j_{1:k} \sim U(1,n)$ indicates uniformly sampling $k$ integers from 1 to $n$, or equivalently entries from $M$. Observe that sampling uniformly $k$ times from $M$ is equivalent to $x_{1:k} \sim p(x)$. Representations stored in the memory bank are removed from the automatic differentiation tape, but in return, we can choose a large $k$. Several later approaches \citep{zhuang2019local,tian2019contrastive,iscen2019label,srinivas2020curl} build on the IR framework. In particular, Contrastive Multiview Coding \citep{tian2019contrastive}, or CMC, decomposes an image into luminance and AB-color channels. Then, CMC is the sum of two IR losses where the memory banks for each ``modality'' are swapped, encouraging the representation of the luminance of an image to be ``close'' to the representation of the AB-color of that image, and vice versa.

Momentum Contrast \citep{he2019momentum,chen2020improved}, or MoCo, observed that the representations stored in the memory bank grow stale, since possibly thousands of optimization steps pass before updating an entry twice. This is problematic as stale entries could bias gradients. So, MoCo makes two important changes to the IR framework. First, it replaces the memory bank with a first-in first-out (FIFO) queue $Q$ of size $k$. During each minibatch, representations are cached into the queue while the most stale ones are removed. Since elements in a minibatch are chosen i.i.d. from $p(x)$, using the queue as negatives is equivalent to drawing $x_{1:k} \sim p(x)$ i.i.d. Second, MoCo introduces a second (momentum) encoder $g'_{\theta'}$. Now, the primary encoder $g_\theta$ is used to embed one view of instance $x_i$ whereas the momentum encoder is used to embed the other. Again, gradients are not propagated to $g'_{\theta'}$. Instead, its parameters are deterministically set by $\theta' = m \theta' + (1-m)\theta$ where $m$ is a ``momentum'' coefficient. In summary, the MoCo objective is
\begin{equation}
    \mathcal{L}_{\text{MoCo}}(x_i; Q) = \mathbf{E}_{t \sim p(t)} \mathbf{E}_{j \sim U(1,n)} \left[ \log \frac{e^{g_\theta(t(x_i))^T g'_{\theta'}(t'(x_i)) / \tau}}{\frac{1}{k+1}\sum_{j\in \{i,1:k\}} e^{g_\theta(t(x_i))^T Q[j] / \tau}} \right],
\label{eq:moco}
\end{equation}
again equivalent to NCE under a slightly different implementation.

\section{Conditional Noise Contrastive Estimation}
In NCE, the negative examples are sampled i.i.d. from the marginal distribution. Indeed, the existing proof that NCE lower bounds mutual information \citep{poole2019variational} assumes this to be true. However, choosing negatives in this manner may not be the best choice for learning a good representation. For instance, prior work in metric learning has shown the effectiveness of hard negative mining in optimizing triplet losses \citep{wu2017sampling,yuan2017hard,schroff2015facenet}. In this work, we similarly wish to exploit choosing negatives \textit{conditional on the current instance} but to do so in a manner that preserves the relationship of contrastive algorithms to mutual information.

We consider the general case of two random variables $U$ and $V$ according to a distribution $p(u,v)$, although the application to the contrastive setting is straightforward.
To start, suppose we define a new distribution $q(v|v^*)$ conditional on a specific realization $v^*$ of $V$. Ideally, we would like for $q(v|v^*)$ to belong to any distribution family but not all choices of $q$ preserve a bound. We provide a counterexample in the Appendix.
This does not, however, imply that we can only sample negatives from the marginal $p(v)$ \citep{poole2019variational,oord2018representation}. One of our theoretical contributions is to formally define a family of conditional distributions $\mathcal{Q}$ such that for any $q \in \mathcal{Q}$, we can draw negative examples from it, instead of $p$, in the NCE estimator while maintaining a lower bound on $\mathcal{I}(U;V)$. We call our new lower bound on mutual information the Conditional Noise Contrastive Estimator, or CNCE.
The next Theorem shows CNCE to bound $\mathcal{I}(U;V)$.

\begin{thm}
Define $U$ and $V_1$ by $p(u,v_1)$ and let $V_1,...,V_k$ be i.i.d. Fix any $f: (U, V_j)\rightarrow \mathbf{R}$ and put $c = \mathbf{E}_{p(v_1)}[e^{f(u, v_1)} ]$. Pick $B\subset \mathbf{R}$ strictly lower-bounded by $c$. Assume $p(S_B)> 0$ for $S_B = \{v | e^{f(u, v)}  \in B\}$. For Borel $A = A_2 \times....\times A_k$, put $q(V_{2:k} \in A) = \prod_{j=2}^k  p(A_j | S_B)$. Let $\mathcal{L}_{\text{CNCE}}(U; V_1) =  \mathbf{E}_{u, v_1 \sim p(u,v_1)}\mathbf{E}_{v_{2:k}\sim q}\left[\log \frac{e^{f(u, v_1)}}{\frac{1}{k}\sum_{j=1}^k e^{f(u, v_j)}}\right]$. Then $\mathcal{L}_{\text{CNCE}} \leq \mathcal{L}_{\text{NCE}} \leq \mathcal{I}(U;V_1)$.
\label{thm:vince}
\end{thm}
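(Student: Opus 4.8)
The plan is to prove the two inequalities separately. The right-hand one, $\mathcal{L}_{\text{NCE}} \le \mathcal{I}(U;V_1)$, is the standard multi-sample NCE bound for an arbitrary critic $f$ and an arbitrary number of negatives, established in \citep{poole2019variational}: after reindexing, our $\mathcal{L}_{\text{NCE}}$ — one positive $(u,v_1)\sim p(u,v_1)$ together with $k-1$ negatives drawn i.i.d.\ from the marginal $p(v_1)$, and the normalizer $\tfrac1k\sum_{j=1}^k$ — is exactly the objective in Eq.~\ref{eq:nce}, so nothing new is needed. All the work is in the left-hand inequality $\mathcal{L}_{\text{CNCE}}\le\mathcal{L}_{\text{NCE}}$, where the only difference between the two functionals is the law of the negatives $v_{2:k}$: the conditional $q$ versus the marginal $p$.

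I would fix $u$ and $v_1$ throughout and integrate at the end: everything in the hypothesis ($f$, $c$, $B$, $S_B$) may depend on $u$, and the outer expectation $\mathbf{E}_{u,v_1\sim p(u,v_1)}$ factors as $\mathbf{E}_u\mathbf{E}_{v_1\mid u}$, so it suffices to compare the inner expectations over $v_{2:k}$ for each fixed pair. Writing $g(v)=e^{f(u,v)}>0$, $a=g(v_1)$, and $S=\sum_{j=2}^k g(v_j)$, the log-ratio inside either functional equals $\log g(v_1)+\log k-\log(a+S)$, and only the term $\log(a+S)$ depends on the negatives. Hence $\mathcal{L}_{\text{NCE}}-\mathcal{L}_{\text{CNCE}} = \mathbf{E}_{u,v_1}\!\big[\mathbf{E}_{v_{2:k}\sim q}[\log(a+S)]-\mathbf{E}_{v_{2:k}\sim p}[\log(a+S)]\big]$, and it is enough to show this bracket is nonnegative for every $(u,v_1)$.

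The crux — and the only place the hypotheses on $B$ are used in a genuinely nonlinear way — is a pinch of both inner expectations around the single value $\log(a+(k-1)c)$. On the $q$ side, $q$ is supported on $S_B$, so $q$-almost surely each $g(v_j)\in B$ and therefore $g(v_j)> c$ because $B$ is strictly lower-bounded by $c$; thus $S>(k-1)c$ a.s.\ under $q^{\otimes(k-1)}$, and since $\log$ is increasing, $\mathbf{E}_{v_{2:k}\sim q}[\log(a+S)]\ge \log(a+(k-1)c)$. On the $p$ side, $\mathbf{E}_p[g(v_j)]=\mathbf{E}_{p(v_1)}[e^{f(u,v_1)}]=c$, so $\mathbf{E}_p[S]=(k-1)c$, and concavity of $\log$ (Jensen) gives $\mathbf{E}_{v_{2:k}\sim p}[\log(a+S)]\le \log(a+\mathbf{E}_p[S])=\log(a+(k-1)c)$. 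Chaining these, $\mathbf{E}_{v_{2:k}\sim p}[\log(a+S)]\le \mathbf{E}_{v_{2:k}\sim q}[\log(a+S)]$; substituting back and taking $\mathbf{E}_{u,v_1\sim p(u,v_1)}$ yields $\mathcal{L}_{\text{CNCE}}\le\mathcal{L}_{\text{NCE}}$.

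I expect the main obstacle to be not this inequality chain but the measure-theoretic hygiene around it: checking that $q$ is a well-defined probability law (this is where the hypothesis $p(S_B)>0$ enters), that $c<\infty$ so the Jensen step and the pivot $\log(a+(k-1)c)$ make sense, and that $a+S\ge a>0$ so every logarithm is bounded below and all the expectations exist and are not $-\infty$. It is also worth remarking, in line with the counterexample deferred to the appendix, why the argument is genuinely special to conditionals of the form $p(\cdot\mid S_B)$: we used the \emph{almost-sure} bound $g>c$ under $q$, and merely controlling the mean $\mathbf{E}_q[S]$ would send Jensen the wrong way, so an arbitrary $q$ with $\mathbf{E}_q[g]\ge c$ need not preserve the bound.
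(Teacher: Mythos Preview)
Your proof is correct and follows essentially the same route as the paper: reduce $\mathcal{L}_{\text{CNCE}}\le\mathcal{L}_{\text{NCE}}$ to comparing the inner expectations of the log-denominator, use Jensen on the $p$-side to get $\mathbf{E}_p[\log(a+S)]\le\log(a+(k-1)c)$, and use the almost-sure bound $g(v_j)>c$ on $S_B$ to get $\log(a+(k-1)c)\le\mathbf{E}_q[\log(a+S)]$. Your exposition is in fact tidier than the paper's on one point: you explicitly isolate the positive term $a=g(v_1)$ and restrict the pointwise comparison to the negatives $v_{2:k}$, whereas the paper's phrasing ``which holds if $v_j\in S_B$ for $j\in[k]$'' inadvertently suggests the positive $v_1$ must also lie in $S_B$, which it need not.
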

\begin{proof}
To show $\mathcal{L}_{\text{CNCE}} \leq \mathcal{L}_{\text{NCE}}$, we show $\mathbf{E}_p[\log(\sum_{j=1}^k e^{f(u,v_j} ) ] < \mathbf{E}_q[\log(\sum_{j=1}^k e^{f(u,v_j} ) ]$. To see this, apply Jensen's to the left-hand side of $\log \mathbf{E}_p[\sum_{j=1}^k e^{f(u,v_j} ]  < \log \sum_{j=1}^k e^{f(u,v_j)}$, which holds if $v_j \in S_B$ for $j \in [k]$, and then take the expectation $\mathbf{E}_q$ of both sides. The last inequality holds by monoticity $\log$, linearity of expectation, and the fact that $\mathbf{E}_p[ e^{f(u,v_j)} ] \leq e^{f(u,v_j)}$.
\end{proof}

\textbf{Theorem Intuition.} As a high level summary, although using arbitrary negative distributions in NCE does not bound mutual information, we have found a restricted class of distributions $\mathcal{Q}$ that ``subsets the support of the marginal $p(v)$''. In other words, given some fixed $v^*$, we have defined $q(v|v^*) \in \mathcal{Q}$ to constrain the support of $p$ to a set $S_B$ whose members are ``close'' to $v^*$ as measured by the ``similarity function'' $f$. The remaining probability mass assigned by $p$ to elements outside $S_B$ is renormalized to sum to one (i.e. $p(A_j|S_B) = \frac{p(A_j \cap S_B)}{p(S_B)}$) for $q$ to be well-defined. Intuitively, $q$ cannot change $p$ too much: it must redistribute mass proportionally. The primary distinction then, is the smaller support of $q$, which forces the negatives we sample to be harder for $f$ to distinguish from $v^*$. Thm.~\ref{thm:vince} shows that substituting such a distribution into NCE still bounds mutual information.

Interestingly, we also find that CNCE is a looser bound than NCE, which raises the question: \textit{when is a looser bound ever more useful?}
In reply, we show that while CNCE is a more biased estimator than NCE, in return it has lower variance. This is natural to expect: because $q$ is the result of shifting $p$ around a smaller support, samples from $q$ have less opportunity to deviate, hence lower variance.
\begin{thm}
Define $U$ and $V_1$ by $u,v_1 \sim p(u,v_1)$. Fix $q$ as stated in Theorem~\ref{thm:vince}. Define $Z(v_{2:k}) = \log \frac{e^{f(u, v_1)}}{\frac{1}{k}\sum_{j=1}^k e^{f(u, v_j)}}$. By Theorem~\ref{thm:vince}, $\mathbf{E}_{p(v_{2:k})}[Z]$ and $\mathbf{E}_{q(v_{2:k})}[Z]$ are estimators for the mutual information between $U$ and $V_1$. Suppose that $S_B$ is chosen to ensure $\text{Var}_{q(v_{2:k})}[Z ] \leq  \text{Var}_{\tilde{q}(v_{2:k})}[Z ]$, where $\tilde{q}(A) = p(A| S_B^c)$. Then $\textup{Bias}_p(Z) \leq \textup{Bias}_q(Z)$ and $\textup{Var}_p(Z) \geq \textup{Var}_q(Z) $. That is, sampling $v_{2:k}\sim q$ instead of $p$ trades higher bias for lower variance.
\label{thm:biasvar}
\end{thm}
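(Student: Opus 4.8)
The plan is to get the bias inequality essentially for free from Theorem~\ref{thm:vince} and to concentrate the work on the variance comparison. \emph{Bias.} Since both $\mathbf{E}_{p(v_{2:k})}[Z]$ and $\mathbf{E}_{q(v_{2:k})}[Z]$ are lower bounds on $\mathcal{I}(U;V_1)$, each bias is the gap to the truth: $\textup{Bias}_p(Z)=\mathcal{I}(U;V_1)-\mathbf{E}_{p(v_{2:k})}[Z]$ and similarly for $q$. The core estimate inside the proof of Theorem~\ref{thm:vince} --- namely $\mathbf{E}_{p(v_{2:k})}[\log\frac{1}{k}\sum_{j=1}^k e^{f(u,v_j)}]\le\mathbf{E}_{q(v_{2:k})}[\log\frac{1}{k}\sum_{j=1}^k e^{f(u,v_j)}]$, with $u,v_1$ held fixed --- rearranges to $\mathbf{E}_{q(v_{2:k})}[Z]\le\mathbf{E}_{p(v_{2:k})}[Z]$, whether read conditionally on $(u,v_1)$ or in aggregate over $(u,v_1)\sim p$. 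Hence $\textup{Bias}_p(Z)\le\textup{Bias}_q(Z)$, and this is the only place Theorem~\ref{thm:vince} is invoked.

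\emph{Variance.} Hold $u,v_1$ fixed, as in the hypothesis, so that the only randomness is the $k-1$ i.i.d.\ negatives. The structural fact that drives everything is that $q$ and $\tilde q$ recombine into $p$: writing $\pi:=p(S_B)$, we have $\pi\in(0,1)$ --- it is positive by assumption, and it is $<1$ because $S_B$ sits strictly above the level $c$, which is the mean of $e^{f(u,\cdot)}$, so $p(S_B)=1$ would give $c>c$ --- and the identities $q(A)=p(A\mid S_B)$, $\tilde q(A)=p(A\mid S_B^c)$ yield $p(A)=\pi\,q(A)+(1-\pi)\,\tilde q(A)$ for every Borel $A$. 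Consequently an i.i.d.\ draw $v_{2:k}\sim p$ has the same law as: draw i.i.d.\ labels $b_2,\dots,b_k\sim\mathrm{Bernoulli}(\pi)$, then draw $v_j\sim q$ when $b_j=1$ and $v_j\sim\tilde q$ when $b_j=0$. Conditioning on the labels and applying the law of total variance, $\textup{Var}_p[Z]=\mathbf{E}_{b}\big[\textup{Var}(Z\mid b)\big]+\textup{Var}_{b}\big(\mathbf{E}[Z\mid b]\big)\ge\mathbf{E}_{b}\big[\textup{Var}(Z\mid b)\big]$, discarding the nonnegative between-labels term.

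It then suffices to show $\textup{Var}(Z\mid b)\ge\textup{Var}_q[Z]$ for every label vector $b$, the all-ones vector being exactly the all-$q$ configuration. For a single negative ($k=2$) this is immediate from the hypothesis: $\textup{Var}(Z\mid b_2=0)=\textup{Var}_{\tilde q}[Z]\ge\textup{Var}_q[Z]$ while $\textup{Var}(Z\mid b_2=1)=\textup{Var}_q[Z]$, so $\textup{Var}_p[Z]\ge\pi\,\textup{Var}_q[Z]+(1-\pi)\,\textup{Var}_{\tilde q}[Z]\ge\textup{Var}_q[Z]$. For general $k$ I would run a hybrid, walking from $(1,\dots,1)$ to $b$ by flipping one label from $q$ to $\tilde q$ at a time and arguing that each flip does not decrease $\textup{Var}[Z]$ --- the stated inequality $\textup{Var}_{q(v_{2:k})}[Z]\le\textup{Var}_{\tilde q(v_{2:k})}[Z]$ being the fully-flipped endpoint of such a chain --- and then chaining to get $\textup{Var}(Z\mid b)\ge\textup{Var}_q[Z]$, whence $\textup{Var}_p[Z]\ge\textup{Var}_q[Z]$.

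I expect the main obstacle to be exactly this single-flip monotonicity once $k\ge 3$: after conditioning on the other negatives, switching one negative's source from $q$ to $\tilde q$ perturbs both the conditional variance and the conditional mean of $Z$ (the latter through the partition function it shares with the untouched negatives), so the claim does not follow in one line from the isolated two-point fact $\textup{Var}_q[Z]\le\textup{Var}_{\tilde q}[Z]$ --- it is really a regularity requirement on how $S_B$ is chosen (automatic, as the $k=2$ case shows, when there is only one negative). A cleaner self-contained alternative that sidesteps the hybrid: set $\mu_\lambda:=\lambda q+(1-\lambda)\tilde q$ and prove directly that $\lambda\mapsto\textup{Var}_{\mu_\lambda^{\otimes(k-1)}}[Z]$ is non-increasing on $[0,1]$, using the explicit form $Z=\log\!\big(k\,e^{f(u,v_1)}\big)-\log\!\big(e^{f(u,v_1)}+\sum_{j\ge 2}e^{f(u,v_j)}\big)$ together with the fact that $S_B$ collects the large values of $e^{f(u,\cdot)}$; then, since $\pi\le 1$, $\textup{Var}_p[Z]=\textup{Var}_{\mu_\pi^{\otimes(k-1)}}[Z]\ge\textup{Var}_{\mu_1^{\otimes(k-1)}}[Z]=\textup{Var}_q[Z]$.
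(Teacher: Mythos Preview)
Your bias argument is exactly the paper's: invoke Theorem~\ref{thm:vince} to obtain $\mathbf{E}_q[Z]\le\mathbf{E}_p[Z]$ and read off the ordering of the gaps to $\mathcal{I}(U;V_1)$.

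For the variance, the paper's proof uses the same law-of-total-variance skeleton but with a coarser conditioning than yours. It treats $\mathbf{1}_{S_B}$ as a \emph{single} binary indicator and asserts
\[
\mathbf{E}_p\bigl[\textup{Var}_p[Z\mid\mathbf{1}_{S_B}]\bigr]=p(S_B)\,\textup{Var}_{q(v_{2:k})}[Z]+(1-p(S_B))\,\textup{Var}_{\tilde q(v_{2:k})}[Z],
\]
then combines this with the hypothesis $\textup{Var}_q[Z]\le\textup{Var}_{\tilde q}[Z]$ to finish in one line. Your decomposition via the full label vector $b\in\{0,1\}^{k-1}$ is the more scrupulous version of exactly this move: the paper's two-term identity is only literally correct when $k=2$, since for $k\ge 3$ the event ``all negatives lie in $S_B$'' has probability $p(S_B)^{k-1}$ rather than $p(S_B)$, and its complement under $p^{\otimes(k-1)}$ is not the product measure $\tilde q^{\otimes(k-1)}$. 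So the single-flip monotonicity obstacle you isolate for $k\ge 3$ is genuine --- and the paper's own proof does not resolve it either; it simply writes down the two-term decomposition without further justification. In short, your strategy coincides with the paper's; your execution is more careful, and the gap you flag is present in the published argument as well.
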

The proof is in Sec.~\ref{sec:proof:biasvar}.
Given a good similarity function $f$, the elements inside $S_B$ contain values of $v$ truly similar to the fixed point $u$ as measured by $f$. Thus, the elements outside of $S_B$ occupy a larger range of $f$, and thereby are more varied, satisfying the assumption.
Thm.~\ref{thm:biasvar} provides one answer to our question of looseness.
In stochastic optimization, a lower variance objective may lead to better local optima. For representation learning, using CNCE to sample more difficult negatives may (1) encourage the representation to distinguish fine-grained features  useful in transfer tasks, and (2) provide less noisy gradients.
Thm.~\ref{thm:biasvar} also raises a warning: for a  bad similarity function $f$, such as a randomly initialized neural network, we may not get the benefits of lower variance. We will explore the consequences of this for representation learning in the next section.
\begin{figure}[t!]
  \centering
  \begin{subfigure}[b]{0.24\textwidth}
    \centering
    \includegraphics[width=0.5\textwidth]{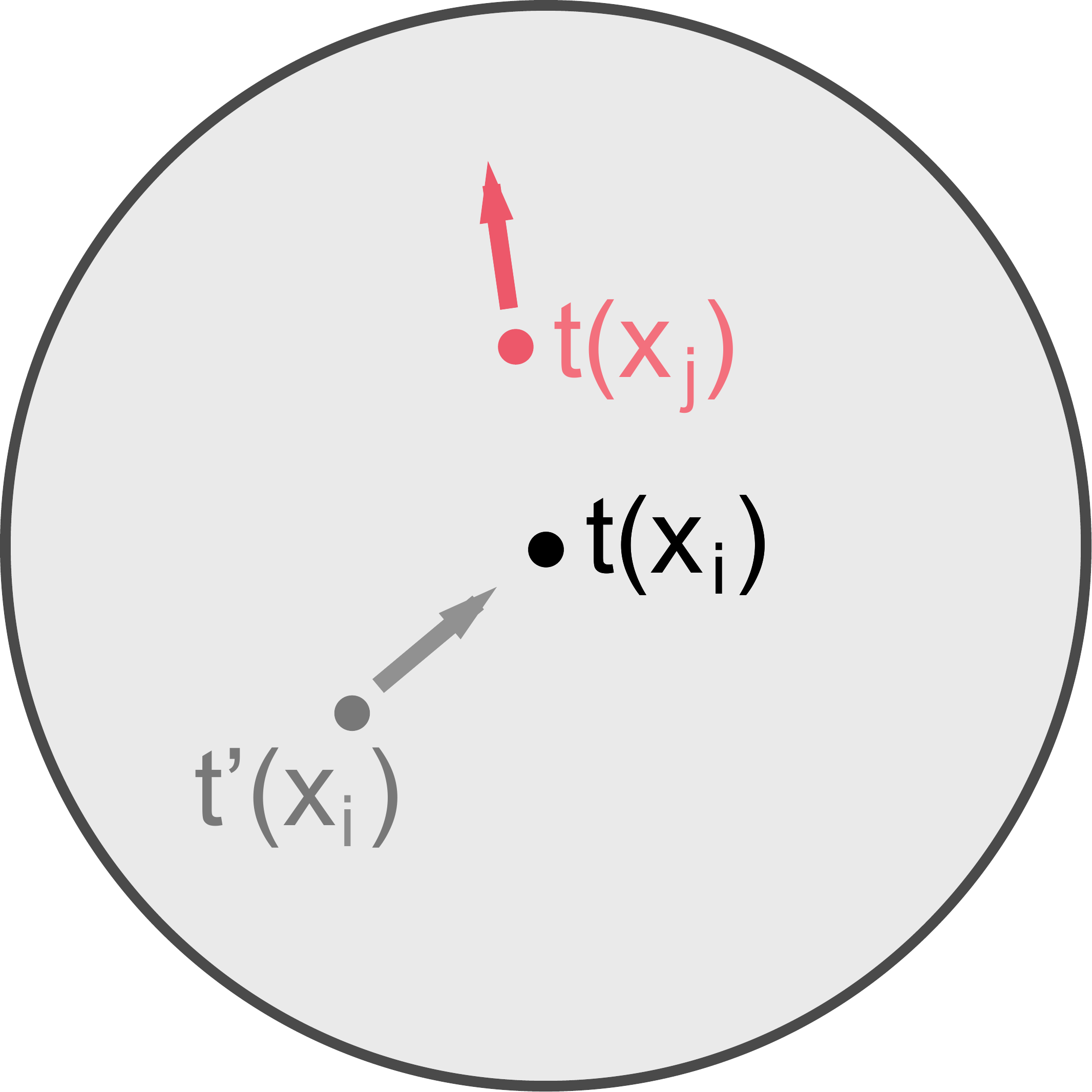}
    \caption{IR, CMC, MoCo}
  \end{subfigure}
  \begin{subfigure}[b]{0.24\textwidth}
    \centering
    \includegraphics[width=0.5\textwidth]{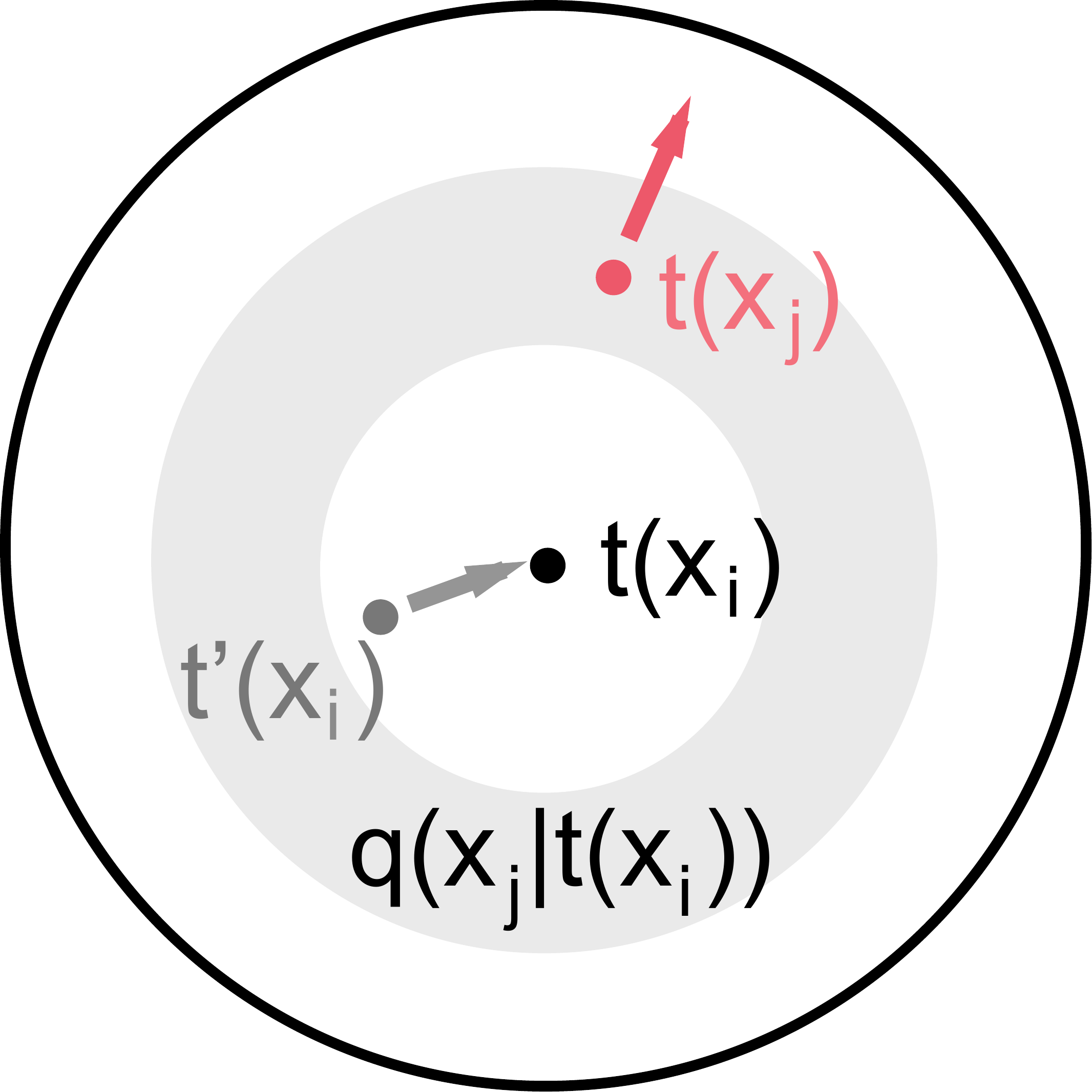}
    \caption{Ring}
  \end{subfigure}
  \begin{subfigure}[b]{0.5\textwidth}
    \centering
    \includegraphics[width=0.8\textwidth]{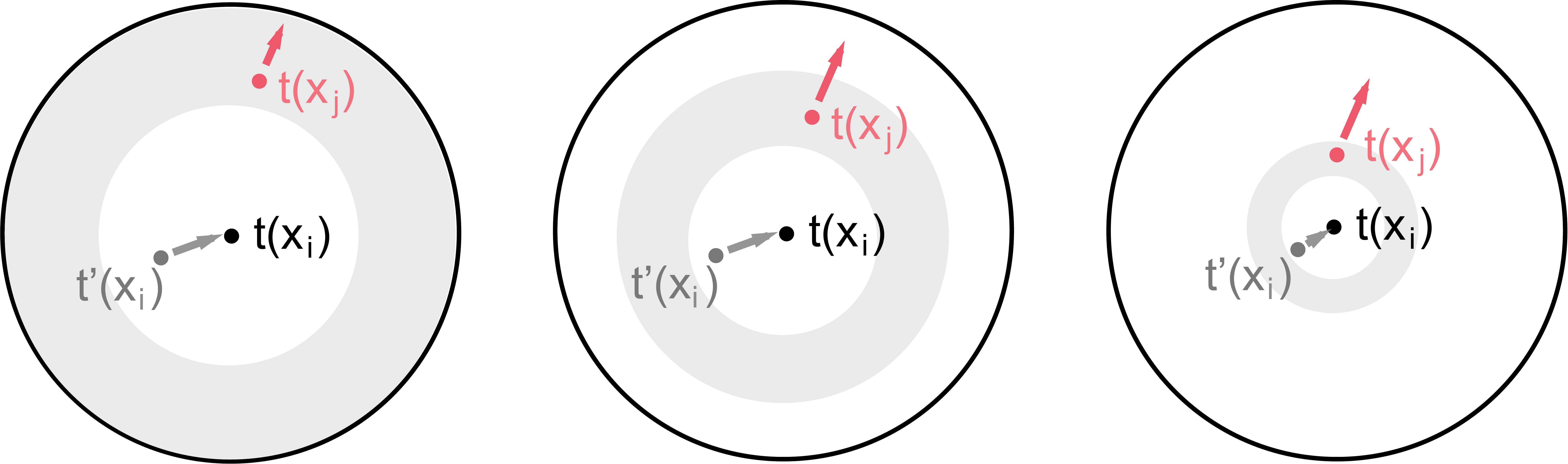}
    \caption{Annealed Ring}
  \end{subfigure}
  \caption{Black: view of instance $x_i$; gray: second view of $x_1$  i.e. the positive example; red: negative samples; gray area: negative distribution $q(x|t(x_i))$. In subfigure (c), the negative samples are annealed to be closer to $t(x_i)$ through training. In other words, the support of $q$ shrinks.}
  \label{fig:models}
\end{figure}
\section{Ring Discrimination}
\label{sec:ring}
We have shown that the CNCE objective provides a lower variance bound on the mutual information between two random variables. Now, we wish to apply CNCE to contrastive learning where the two random variables are derived from two views of a complex random variable $X$.
To do so, we must specify a concrete proposal for the support set $S_B$.
\begin{lrbox}{\codebox}
\begin{lstlisting}[
  language=Python,
  basicstyle=\tiny,
  commentstyle=\color{mygreen},
]
# g_q, g_k: encoder networks
# m: momentum; t: temperature
# omega_u: ring upper threshold
# omega_\ell: ring lower threshold
tx1=aug(x)  # random augmentation
tx2=aug(x)
emb1=norm(g_q(tx1))
emb2=norm(g_k(tx2)).detach()
dps=sum(tx1*tx2)/t  # dot product
# sort from farthest to closest neg
all_dps=sort(emb1@queue.T/t)
# find indices of thresholds
ix_\ell=omega_\ell*len(queue)
ix_u=omega_u*len(queue)
ring_dps=all_dps[:,ix_\ell:ix_u]
# nonparametric softmax
loss=-dps+logsumexp(ring_dps)
loss.backward()
step(g_q.params)
# moco updates
g_k.params = m*g_k.params+\
             (1-m)*g_q.params
enqueue(queue,emb2); dequeue(queue)
# threshold updates
anneal(omega_\ell); anneal(omega_u)
\end{lstlisting}
\end{lrbox}
\begin{wrapfigure}{L}{0.34\textwidth}
  \begin{minipage}{0.34\textwidth}
    \begin{algorithm}[H]
      \usebox{\codebox}
      \caption{MoCoRing}
      \label{fig:algos}
    \end{algorithm}
  \end{minipage}
\end{wrapfigure}
Suppose we take the $i$-th example $x_i \in \mathcal{D}$, and choose a percentile $w_\ell \in [0,100]$. Given the dataset $\mathcal{D}$, we consider each $x$ as a negative example if and only if the normalized distance, $g_\theta(t(x_i))^Tg_\theta(t'(x))$, is above the $w_\ell$-th percentile of all $x \in \mathcal{D}$ for fixed transforms $t,t' \sim p(t)$. That is, we construct $q(x|t(x_i))$ such that we remove examples from the dataset whose representation dot producted with the representation of $t(x_i)$ is ``too low''. (Note that $w_\ell = 0$ recovers Eq.~\ref{eq:contrastive}.) Under this formulation, $w_\ell$ uniquely specifies a set of distances $B$ (recall Thm.~\ref{thm:vince}) no lower than a threshold. For a smaller enough choice of $w_\ell$, this threshold will be greater than expected distance with respect to $p$. In effect, the pre-image set $S_B$ contains all $x \in \mathcal{D}$ whose distance to $t(x_i)$ in representation space is above the $w_\ell$-th percentile.

However, picking the closest examples to $t(x_i)$ as its negative examples may be inappropriate, as these examples might be better suited as positive views rather than negatives \citep{zhuang2019local,xie2020delving}. As an extreme case, if the same image is included in the dataset twice, we would not like to select it as a negative example for itself.  Furthermore, choosing  negatives ``too close'' to the current instance may result in representations that pick up on fine-grain details only, ignoring larger semantic concepts. For instance, we may find representations that can distinguish two cats based on fur but are unable to classify animals from cars. This suggests removing a set from $S_B$ of instances we consider ``too close'' to the current example. In practice, this translates to picking two percentiles $w_\ell < w_u \in [0,100]$. Now, we consider each example $x$ as a negative example for $x_i$ if and only if $g_\theta(t(x_i))^Tg_\theta(t'(x))$ is within the $w_\ell$-th to $w_u$-th percentiles of all $x \in \mathcal{D}$.
We are free to define the support set $S_B$ in this manner as Thm.~\ref{thm:vince} does not require $S_B$ to contain \textit{all} elements with high similarity to $t(x_i)$.
Intuitively, we construct a conditional distribution for negative examples that are (1) not too easy since their representations are fairly similar to that of $t(x_i)$ and (2) not too hard since we remove the ``closest'' instances to $x_i$ from $S_B$. We call this algorithm \textit{Ring Discrimination}, or Ring, inspired by the shape of negative set (see Fig.~\ref{fig:models}).

Ring can be easily added to popular contrastive algorithms. For IR and CMC, this amounts to simply sampling entries in the memory bank that fall within the $w_\ell$-th to $w_u$-th percentile of  all distances to the current instance view (in representation space). Similarly, for MoCo, we sample from a subset of the queue (chosen to be in the $w_\ell$-th to $w_u$-th percentile), preserving the FIFO ordering. In our experiments, we refer to these as IRing, CMCRing, MoCoRing, respectively. Alg.~\ref{fig:algos} shows PyTorch-like pseudocode for MoCoRing. One of the strengths of this approach is the simplicity: the algorithm requires only a few lines of code on top of existing implementations.

\textbf{Annealing Policy.}
Naively using Ring can collapse to a poor representation, as hinted by Thm.~\ref{thm:biasvar}. Early in training, when the representations are still disorganized, choosing negatives that are close in representation may detrimentally exclude those examples that are ``actually'' close. This could lock in poor local minima. 
To avoid this possibility we propose to use Ring with an annealing policy that reduces the size of $S_B$ throughout training. To do this, early in training we choose $w_\ell$ to be small. Over many epochs, we slowly anneal $w_\ell$ to approach $w_u$ thereby selecting more difficult negatives. We explored several annealing policies and found a linear schedule to be well-performing and simple (see Appendix). In our experiments, we found annealing thresholds to be crucial: being too aggressive with negatives early in training resulted in convergence to poor optima.

\section{Experiments}
We explore our method applied to IR, CMC, and MoCo in four commonly used visual datasets. As in prior work \citep{wu2018unsupervised,zhuang2019local,he2019momentum,misra2020self,henaff2019data,kolesnikov2019revisiting,donahue2019large,bachman2019learning,tian2019contrastive,chen2020simple}, we evaluate each method by linear classification on frozen embeddings. That is, we optimize a contrastive objective on a pretraining dataset to learn a representation; then, using a transfer dataset, we fit logistic regression on representations only. A better representation would contain more ``object-centric'' information, thereby achieving a higher classification score.

\textbf{Training Details.} We resize input images to be 256 by 256 pixels, and normalize them using dataset mean and standard deviation. The temperature $\tau$ is set to 0.07. We use
a composition of a 224 by 224-pixel random crop, random color jittering, random horizontal flip, and random grayscale conversion as our augmentation family $\mathcal{T}$.
We use a ResNet-18 encoder with a output dimension of 128. For CMC, we use two ResNet-18 encoders, doubling the number of parameters. For linear classification, we treat the pre-pool output (size $512\times 7 \times 7$) after the last convolutional layer as the input to the logistic regression. Note that this setup is equivalent to using a linear projection head \citep{chen2020simple,chen2020improved}.
In pretraining, we use SGD with learning rate 0.03, momentum 0.9 and weight decay 1e-4 for 300 epochs and batch size 256 (128 for CMC). We drop the learning rate twice by a factor of 10 on epochs 200 and 250. In transfer, we use SGD with learning rate 0.01, momentum 0.9, and no weight decay for 100 epochs without dropping learning rate. Future work can explore orthogonal factors such as choice of architecture or pretext task.

\begin{table}[h!]
\tiny
\centering
\begin{subtable}[h]{0.24\textwidth}
    \centering
    \begin{tabular}{ll}
    \toprule
    Model & Transfer Acc. \\
    \midrule
    IR & 81.2 \\
    IRing & 83.9 {\color{Green} (+2.7)} \\
    CMC$^*$ & 85.6 \\
    CMCRing$^*$ & \textbf{87.6} {\color{Green} (+2.0)} \\
    MoCo & 83.1 \\
    MoCoRing & 86.1 {\color{Green} (+3.0)}\\
    LA & 83.9 \\
    \bottomrule
    \end{tabular}
    \caption{CIFAR10}
\end{subtable}
\begin{subtable}[h]{0.24\textwidth}
    \centering
    \begin{tabular}{ll}
    \toprule
    Model & Transfer Acc. \\
    \midrule
    IR & 60.4 \\
    IRing & \textbf{62.3} {\color{Green} (+1.9)} \\
    CMC$^*$ & 56.0 \\
    CMCRing$^*$ & 56.0 {\color{Gray} (+0.0)}\\
    MoCo & 59.1 \\
    MoCoRing & 61.5 {\color{Green} (+2.4)}\\
    LA & 61.4 \\
    \bottomrule
    \end{tabular}
    \caption{CIFAR100}
\end{subtable}
\begin{subtable}[h]{0.24\textwidth}
    \centering
    \begin{tabular}{ll}
    \toprule
    Model & Transfer Acc. \\
    \midrule
    IR & 61.4 \\
    IRing & 64.3 {\color{Green} (+2.9)} \\
    CMC$^*$ & 63.8 \\
    CMCRing$^*$ & \textbf{66.4} {\color{Green} (+2.6)} \\
    MoCo & 63.8 \\
    MoCoRing & 65.2 {\color{Green} (+1.4)} \\
    LA & 63.0 \\
    \bottomrule
    \end{tabular}
    \caption{STL10}
\end{subtable}
\begin{subtable}[h]{0.24\textwidth}
    \centering
    \begin{tabular}{ll}
    \toprule
    Model & Transfer Acc. \\
    \midrule
    IR & 43.2 \\
    IRing & 48.4 {\color{Green} (+5.2)} \\
    CMC$^*$ & 48.2 \\
    CMCRing$^*$ & \textbf{50.4} {\color{Green} (+2.2)} \\
    MoCo & 52.8 \\
    MoCoRing & 54.6 {\color{Green} (+1.8)} \\
    LA & 48.0 \\
    \bottomrule
    \end{tabular}
    \caption{ImageNet}
\end{subtable}
\caption{Comparison of contrastive algorithms on three  image domains. Superscript ($^*$) indicates models that use twice as many parameters as others e.g. CMC has ``L'' and ``ab'' encoders.}
\label{table:results1}
\end{table}

The results for CIFAR10, CIFAR100, STL10, and ImageNet are in Table~\ref{table:results1}. Overall, IR, CMC, and MoCo all benefit from using more difficult negatives as shown by 2-5\% absolute points of improvement across the four datasets. While we find different contrastive objectives to perform best in each dataset, the improvements from Ring are consistent: the Ring variant outperforms the base for every model and every dataset. We also include as a baseline Local Aggregation, or LA \citep{zhuang2019local}, a popular contrastive algorithm (see Sec.~\ref{sec:la}) that implicitly uses hard negatives without annealing. We find our methods to outperform LA by up to 4\% absolute.

\begin{wraptable}{l}{4.5cm}
\tiny
\centering
\begin{subtable}[h]{0.24\textwidth}
    \centering
    \begin{tabular}{ll}
    \toprule
    Model & Acc. \\
    \midrule
    IR & 81.2 \\
    IRing & 83.9 \\
    IRing (No Anneal) & 81.4 \\
    IRing ($w_u = 100$) & 82.1 \\
    \bottomrule
    \end{tabular}
    \caption{CIFAR10}
\end{subtable}
\begin{subtable}[h]{0.24\textwidth}
    \centering
    \begin{tabular}{ll}
    \toprule
    Model & Acc. \\
    \midrule
    IR & 43.2 \\
    IRing & 48.4 \\
    IRing (No Anneal) & 41.3 \\
    IRing ($w_u = 100$) & 47.3 \\
    \bottomrule
    \end{tabular}
    \caption{ImageNet}
\end{subtable}
\caption{Lesioning the effects of annealing and choice of $w_u$.}
\label{table:lesion}
\end{wraptable}
\textbf{Lesions: Annealing and Upper Boundary.}
Having found good performance with Ring Discrimination, we want to assess the importance of the individual components that comprise Ring. We focus on the annealing policy and the exclusion of very close negatives from $S_B$.
Concretely, we measure the transfer accuracy of (1) IRing without annealing and (2) IRing with an upper percentile $w_u$ set to 100, thereby excluding no close negatives. That is, $S_B$ contains \textit{all} examples in the dataset with representation similarity greater than the $w_\ell$-th percentile (a ``ball'' instead of a ``ring''). Table~\ref{table:lesion} compares these lesions to IR and full IRing on CIFAR10 and ImageNet classification transfer. We observe that both lesions result in worse transfer accuracy, with proper annealing being especially important, confirming the suspicions raised by Thm.~\ref{thm:biasvar}.
%

\textbf{Transferring Features.}
Thus far we have only evaluated the learned representations on unseen examples from the training distribution. As the goal of unsupervised learning is to capture \textit{general} representations, we are also interested in their performance on new, unseen distributions. To gauge this, we use the same linear classification paradigm on a suite of image datasets from the ``Meta Dataset'' collection \citep{triantafillou2019meta} that have been used before in contrastive literature \citep{chen2020simple}. All representations were trained on CIFAR10. For each transfer dataset, we compute mean and variance from a training split to normalize input images, which we found important for generalization to new visual domains.
\begin{table}[h!]
    \tiny
    \centering
    \begin{tabular}{llllllllll}
    \toprule
    Model & Aircraft & CUBirds & DTD & Fungi & MNIST & FashionMNIST & TrafficSign & VGGFlower & MSCOCO \\
    \midrule
    IR & 40.9 & 17.9 & 39.2 & 2.7 & 96.9 & 91.7 & 97.1 & 68.1 & 52.4 \\
    IRing & 40.6 {\color{Red} (-0.3)} & 17.9 {\color{Gray} (+0.0)} & 39.5 {\color{Gray} (+0.3)} & 3.4 {\color{Green} (+0.7)} & 97.8 {\color{Green} (+0.9)} & 91.6 {\color{Gray} (+0.1)} & 98.8 {\color{Green} (+1.7)} & 68.5 {\color{Gray} (+0.4)} & 52.5 {\color{Gray} (+0.1)} \\
    MoCo & 41.5 & 18.0 & 39.7 & 3.1 & 96.9 & 90.9 & 97.3 & 64.5 & 52.0 \\
    MoCoRing & 41.6{\color{Gray} (+0.1)} & 18.6 {\color{Green} (+0.6)} & 39.5 {\color{Red} (-0.2)} & 3.6 {\color{Green} (+0.5)} & 97.9 {\color{Green} (+1.0)} & 91.3 {\color{Gray} (+0.4)} & 99.3 {\color{Green} (+2.0)} & 69.1 {\color{Green} (+4.6)} & 52.6 {\color{Green} (+0.6)}\\
    CMC & 40.1 & 15.8 & 38.3 & 4.3 & 97.5 & 91.5 & 94.6 & 67.1 &  51.4 \\
    CMCRing & 40.8 {\color{Green} (+0.7)} & 16.8 {\color{Green} (+1.0)} & 40.6 {\color{Green} (+2.3)} & 4.2 {\color{Red} (-0.1)} & 97.9 {\color{Gray} (+0.4)} & 92.1 {\color{Green} (+0.6)} & 97.1 {\color{Green} (+2.5)} & 69.1 {\color{Green} (+2.0)} & 52.1 {\color{Green} (+0.7)} \\
    LA & 41.3 & 17.8 & 39.0 & 2.3 & 97.2 & 92.3 & 98.2 & 66.9 & 52.3 \\
    \bottomrule
    \end{tabular}
    \caption{Transferring CIFAR10 embeddings to various image distributions.}
    \label{table:meta}
\end{table}

We find in Table~\ref{table:meta} that the Ring models are competitive with the non-Ring analogues, with increases in transfer accuracies of 0.5 to 2\% absolute. Most notable are the TrafficSign and VGGFlower datasets in which Ring models surpass others by a larger margin. We also observe that IRing largely outperforms LA. This suggests the features learned with more difficult negatives are not only useful for the training distribution but may also be transferrable to many visual datasets.

\textbf{More Downstream Tasks.}
Object classification is a popular transfer task, but we want our learned representations to capture holistic knowledge about the contents of an image. We must thus evaluate performance on transfer tasks such as detection and segmentation that require different kinds of visual information. We study four additional downstream tasks: object detection on COCO \citep{lin2014microsoft} and Pascal VOC'07 \citep{everingham2010pascal}, instance segmentation on COCO, and keypoint detection on COCO. In all cases, we employ embeddings trained on ImageNet with a ResNet-18 encoder. We base these experiments after those found in \cite{he2019momentum} with the same hyperparameters. However, we use a smaller backbone (ResNet-18 versus ResNet-50) and we freeze its parameters instead of finetuning them. We adapt code from Detectron2 \citep{wu2019detectron2}.

\begin{table}[h!]
\tiny
\centering
\begin{tabular}{l|lll|lll|lll|lll}
\toprule
 & \multicolumn{3}{c}{COCO: Object Detection} & \multicolumn{3}{c}{COCO: Inst. Segmentation} & \multicolumn{3}{c}{COCO: Keypoint Detection} & \multicolumn{3}{c}{VOC: Object Detection} \\
\midrule
Arch. & \multicolumn{6}{c}{Mask R-CNN, R$_{18}$-FPN, 1x schedule} & \multicolumn{3}{c}{R-CNN, R$_{18}$-FPN} & \multicolumn{3}{c}{Faster R-CNN, R$_{18}$-C4} \\
 \midrule
 Model & AP$^{\text{bb}}$ & AP$_{50}^{\text{bb}}$ & AP$_{75}^{\text{bb}}$ & AP$^{\text{mk}}$ & AP$_{50}^{\text{mk}}$ & AP$_{75}^{\text{mk}}$ & AP$^{\text{kp}}$ & AP$_{50}^{\text{kp}}$ & AP$_{75}^{\text{kp}}$ & AP$^{\text{bb}}$ & AP$_{50}^{\text{bb}}$ & AP$_{75}^{\text{bb}}$ \\
 \midrule
 IR & 8.6 & 19.0 & 6.6 & 8.5 & 17.4 & 7.4 & 34.6 & 63.0 & 32.9 & 5.5 & 14.5 & 3.3 \\
 IRing & \textbf{10.9} & \textbf{22.9} & \textbf{8.7} & 11.0 & 20.9 & 9.6 & 37.2 & 66.1 & 35.7 & 7.6 & 20.3 & 4.4 \\
 MoCo & 6.0 & 14.3 & 4.0 & 10.8 & 21.4 & 9.7 & 37.6 & 66.5 & 36.9 & 7.3 & 17.9 & 4.1 \\
 MoCoRing & 9.4 & 20.3 & 7.6 & \textbf{12.0} & \textbf{22.9} & \textbf{10.8} & \textbf{38.7} & \textbf{67.7} & \textbf{37.9} & \textbf{8.0} & \textbf{22.1} & \textbf{4.8} \\
 LA & 10.2 & 22.0 & 8.1 & 10.0 & 20.3 & 9.0 & 36.3 & 65.3 & 35.1 & 7.6 & 20.0 & 4.3 \\
\bottomrule
\end{tabular}
\caption{Evaluation of ImageNet representations using four visual transfer tasks.}
\label{tab:imagenet}
\end{table}
We find IRing outperforms IR by around 2.3 points in COCO object detection, 2.5 points in COCO Instance Segmentation, 2.6 points in COCO keypoint detection, and 2.1 points in VOC object detection. Similarly, MoCoRing finds consistent improvements of 1-3 points over MoCo on the four tasks. Future work can investigate orthogonal directions of using larger encoders (e.g. ResNet-50) and finetuning ResNet parameters for these individual tasks.

\section{Related Work}
\label{sec:related}
Several of the ideas in Ring Discrimination relate to existing work. Below, we explore these connections, and at the same time, place our work in a fast-paced and growing field.

\textbf{Hard negative mining.}
While it has not been deeply explored in modern contrastive learning, negative mining has a rich line of research in the metric learning community. Deep metric learning utilizes triplet objectives of the form $\mathcal{L}_{\text{triplet}} = d(g_\theta(x_i), g_\theta(x_+)) - d(g_\theta(x_i), g_\theta(x_-) + \alpha)$ where $d$ is a distance function (e.g. L$_2$ distance), $x_+$ and $x_-$ are a positive and negative example, respectively, relative to $x_i$, the current instance, and $\alpha \in \mathbf{R}^+$ is a margin.
In this context, several approaches pick semi-hard negatives: \cite{schroff2015facenet} treats the furthest (in L$_2$ distance) example in the same minibatch as $x_i$ as its negative, whereas \cite{oh2016deep} weight each example in the minibatch by its distance to $g_\theta(x_i)$, thereby being a continuous version of \cite{schroff2015facenet}.
More sophisticated negative sampling strategies developed over time.
In \cite{wu2017sampling}, the authors pick negatives from a fixed normal distribution that is shown to  approximate L$_2$ normalized embeddings in high dimensions. The authors show that weighting by this distribution samples more diverse negatives.
Similarly, HDC \citep{yuan2017hard} simulataneously optimizes a triplet loss using many levels of ``hardness'' in negatives, again improving the diversity. Although triplet objectives paved the way for modern NCE-based objectives, the focus on negative mining has largely been overlooked. Ring Discrimination, being inspired by the deep metric learning literature, reminds that negative sampling is still an effective way of learning stronger representations in the new NCE framework. As such, an important contribution was to do so while retaining the theoretical properties of NCE, namely in relation to mutual information. This, to the best of our knowledge, is novel as negative mining in metric learning literature was not characterized in terms of information theory.

That being said, there are some cases of negative mining in contrastive literature. In CPC \citep{oord2018representation}, the authors explore using negatives from the same speaker versus from mixed speakers in audio applications, the former of which can be interpreted as being more difficult. A recent paper, InterCLR \citep{xie2020delving}, also finds that using ``semi-hard negatives'' is beneficial to contrastive learning whereas negatives that are too difficult or too easy produce worse representations. Where InterCLR uses a margin-based approach to sample negatives, we explore a wider family of negative distributions and show analysis that annealing offers a simple and easy solution to choosing between easy and hard negatives. Further, as InterCLR's negative sampling procedure is a special case of CNCE, we provide theory grounding these approaches in information theory. Finally, a separate line of work in contrastive learning explores using neighboring examples (in embedding space) as ``positive'' views of the instance \citep{zhuang2019local,xie2020delving,asano2019self,caron2020unsupervised,li2020prototypical}. That is, finding a set $\{x_j\}$ such that we consider $x_j = t(x_i)$ for the current instance $x_i$.
While this does not deal with negatives explicitly, it shares similarities to our approach by employing   other examples in the contrastive objective to learn better representations. In the Appendix, we discuss how one of these algorithms, LA \citep{zhuang2019local}, implicitly uses hard negatives and expand the Ring family with ideas inspired by it.

\textbf{Contrastive learning.} We focused primarily on comparing Ring Discrimination to three recent and highly performing contrastive algorithms, but the field contains much more. The basic idea of learning representations to be invariant under a family of transformations is an old one, having been explored with self-organizing maps \citep{becker1992self} and dimensionality reduction \citep{hadsell2006dimensionality}. Before IR, the idea of instance discrimination was studied \citep{dosovitskiy2014discriminative,wang2015unsupervised} among many pretext objectives such as position prediction \citep{doersch2015unsupervised}, color prediction \citep{zhang2016colorful}, multi-task objectives \citep{doersch2017multi}, rotation prediction \citep{gidaris2018unsupervised,chen2019self}, and many other ``pretext'' objectives \citep{pathak2017learning}.
As we have mentioned, one of the primary challenges to instance discrimination is making such a large softmax objective tractable. Moving from a parametric \citep{dosovitskiy2014discriminative} to a nonparametric softmax reduced issues with vanishing gradients, shifting the challenge to efficient negative sampling. The memory bank approach \citep{wu2018unsupervised} is a simple and memory-efficient solution, quickly being adopted by the research community \citep{zhuang2019local,tian2019contrastive,he2019momentum,chen2020improved,misra2020self}. With enough computational resources, it is now also possible to reuse examples in a large minibatch and negatives of one another \citep{ye2019unsupervised,ji2019invariant,chen2020simple}.
In our work, we focus on hard negative mining in the context of a memory bank or queue due to its computational efficiency. However, the same principles should be applicable to batch-based methods (e.g. SimCLR): assuming a large enough batch size, for each example, we only use a subset of the minibatch as negatives as in Ring.
Finally, more recent work \citep{grill2020bootstrap}  removes negatives altogether, which is speculated to implicitly use negative samples via batch normalization \citep{ioffe2015batch}. We leave a more thorough understanding of negatives in BYOL to future work.

\section{Conclusion}
In this work, we presented a family of mutual information estimators that approximate the partition function using samples from a class of conditional distributions. We proved several theoretical statements about this family, showing a bound on mutual information and a tradeoff between bias and variance. Then, we applied these estimators as objectives in contrastive representation learning.
In doing so, we found that our representations outperform existing approaches consistently across a spectrum of contrastive objectives, data distributions, and transfer tasks.
Overall, we hope our work to encourage more exploration of negative sampling in the recent growth of research in contrastive learning. Future work can investigate better annealing protocols to ensure diversity.

\bibliography{iclr2020_conference}
\bibliographystyle{iclr2020_conference}

\newpage
\appendix

\section{Proofs}
\label{sec:proofs}

\subsection{Counterexample Against Unrestricted $q$.}
Pick some $x^* \sim p(x)$ and take $f$ to be a continuous function whose range spans $[0,1]$. For any $\epsilon > 0$, pick $q$ to be a distribution such that for every $x \sim q$ with non-zero probability, we have $f(x, x^*) < \epsilon$. Then, by varying $\epsilon$ closer to 0, we can bring our bound on mutual information to infinity, regardless of the true value, thus ceasing to be a bound. As such, we cannot use an unrestricted family of conditional distributions and preserve a bound.

\subsection{Proof of Theorem~\ref{thm:biasvar}}
\label{sec:proof:biasvar}
\begin{proof}
We separately show the statements regarding bias and variance.

First, as $Z(v_{2:k})$ with $v_{2:k}\sim p(v_{2:k})$ (or NCE) is unbiased, and $\mathbf{E}_{q(v_{2:k})}[Z]$ (or CNCE) lower bounds $\mathbf{E}_{p(v_{2:k})}[Z]$ (Thm.~\ref{thm:vince}), the first statement follows immediately for any choice of $k$.

Second, by the law of total variance,
\[
 \mathbf{E}_p[\text{Var}_p[Z |\mathbf{1}_{S_B } ] ]+  \text{Var}_p(\mathbf{E}_p[Z |\mathbf{1}_{S_B }]) = \text{Var}_p[Z ]
\]
Since both summands are non-negative and the variance on the right is the desired upper-bound, it suffices to show that
\[
p(S_B) \cdot \text{Var}_{q(v_{2:k})}[Z  ] \leq  \mathbf{E}_{p} [\text{Var}_p[Z |\mathbf{1}_{S_B }] ].
\]
This follows immediately from the observation that by definition of $q(\cdot)$ as the conditional distribution $p(\cdot | S_B)$, the expectation on the right is precisely
\[
p(S_B) \cdot \text{Var}_{q(v_{2:k})}[Z ] + (1 - p(S_B)) \cdot  \text{Var}_{\tilde{q}(v_{2:k})}[Z ],
\]
where $\tilde{q}$ is the conditional distribution $p(\cdot | \neg S_B)$.
\end{proof}



\section{A Toy Example}
\label{sup:toy}
Interestingly, Thm.~\ref{thm:vince} shows CNCE to lower bound NCE. To confirm this experimentally, we re-purpose the toy setting from \cite{tschannen2019mutual}.
Pick two random variables $Z$ and $\epsilon$ distributed such that $z_i \sim \mathcal{N}(0, \Sigma_Z)$ and $\epsilon_i \sim \mathcal{N}(0, \Sigma_\epsilon)$ where $\Sigma_Z = \begin{pmatrix}
1 & -0.5 \\
-0.5  & 1
\end{pmatrix}$ and $\Sigma_\epsilon = \begin{pmatrix}
1 & 0.9 \\
0.9 & 1
\end{pmatrix}$
Then, let $(X, Y) = Z + \epsilon$. That is, let $X$ be the first dimension of the sum and $Y$ the second. The mutual information between $X$ and $Y$ can be analytically computed as
$-\frac{1}{2}\log (1- \frac{\Sigma[1,2]\Sigma[2,1]}{\Sigma[1,1]\Sigma[2,2]})$
 since $(X, Y)$ is jointly Gaussian with covariance $\Sigma = \Sigma_Z + \Sigma_\epsilon$.
 \begin{table}[h!]
  \tiny
  \centering
    \begin{tabular}{lcccccccc}
        \toprule
        & True & NCE & \multicolumn{6}{c}{CNCE} \\
        \midrule
        $\omega$ & & & 10 & 25 & 50 & 75 & 90 & 95 \\
        \midrule
        Mean & 0.02041 & 0.01345 & 0.01241   & 0.00220  & 7.29e-5   & 1.67e-5 & 5.87e-6  & 1.97e-6 \\
        Stdev & -- & 0.001 & 3e-4 & 1e-4 & 9e-6 & 2e-6 & 1e-6 & 4e-6 \\
        \bottomrule
    \end{tabular}
    \caption{Looseness of CNCE as $w_\ell$ increases.}
 \label{table:toy1}
\end{table}
For this toy experiment, let $w_\ell$ be a percentage from 0 to 100. Now, we define $S_B$ as all examples whose dot product with the embedding of the current transformed instance is in the top $w_\ell$ percentage of all examples in the dataset. We can tractably compute this using a memory bank.
As $w_\ell$ increases from 10 to 95,
$q(x|t(x_i))$ has smaller support meaning that negative samples are more difficult to separate from the current instance $x_i$.
Table~\ref{table:toy1} compares the estimated mutual information between $X$ and $Y$ from each estimator to the ground truth over 5 runs.
The encoders are 5-layer MLPs with 10 hidden dimensions and ReLU nonlinearities. To build the dataset, we sample 2000 points and optimize the NCE objective with Adam with a learning rate of 0.03, batch size 128, and no weight decay for 100 epochs. Given a percentage for CNCE, we compute distances between all elements in the memory bank and the representation the current image --- we only sample 100 negatives from the top $p$ percent. We conduct the experiment with 5 different random seeds.

\section{Bias and Variance Experiment Details}
For IR, we explore $k = 16,32,64,128,256,512,1024,4096$. For MoCo, we only evaluate $k =256,512,1024,4096$ as the queue cannot be smaller than the batch size. All hyperparameter choices are as detailed in the main experiments. To find the nearest neighbor of the training example, we store all embeddings in a memory bank (separate from the one possibly used in training).

\section{Detectron2 Experiments}
\label{sec:detectron}
We make heavy usage of the Detectron2 code found at \url{https://github.com/facebookresearch/detectron2}. In particular, the script \url{https://github.com/facebookresearch/detectron2/blob/master/tools/convert-torchvision-to-d2.py} allows us to convert a trained ResNet18 model from \texttt{torchvision} to the format needed for Detectron2. The repository has default configuration files for all experiments. We change the following fields to support using a frozen ResNet18:
\begin{lstlisting}
INPUT:
    FORMAT: RGB
MODEL:
    BACKBONE:
        FREEZE_AT: 5
    PIXEL_MEAN:
        - 123.675
        - 103.53
        - 116.28
    PIXEL_STD:
    - 58.395
    - 57.12
    - 57.375
    RESNETS:
        DEPTH: 18
        RES2_OUT_CHANNELS: 64
        STRIDE_IN_1X1: false
    WEIGHTS: <PATH_TO_CONVERTED_TORCHVISION_WEIGHTS>
\end{lstlisting}
We acknowledge that ResNet50 and larger are the commonly used backbones, so our results will not be  state-of-the-art. However, the ordering in performance between algorithms is still meaningful and our primary interest. Future work can explore larger architectures.

\section{Additional Experiments}
We discuss a few observations surrounding Ring Discrimination and in particular, annealing.

\textbf{Hard negative mining is not always productive.}
We can attribute this to the poor quality of embeddings early in training: using hard negatives can (1) simply be too difficult for the encoder to discriminate, or (2) focus the embedding on smaller, perhaps spurious, differences between the instance and the hard negatives, rather than prioritizing higher level semantic information (e.g. object identity).
As a demonstration of this phenomena, Fig.~\ref{fig:discussion}a shows several training runs of IRing on CIFAR10 with varying thresholds $\omega_\ell$ initialized at every 50 epochs of an IR model for a total of 200 epochs. In the legend, a smaller percentage indicates drawing negatives more similar to the embedding of the current instance as measured by dot products. (IRing (100\%) and IR are identical.) The y-axis plots the accuracy of classification where for each test example, we predict the label of its L$_2$ nearest neighbor in the training split \citep{wu2018unsupervised,zhuang2019local}.
Fig.~\ref{fig:discussion} shows that (1) using smaller thresholds at the beginning of training results in lower test accuracies; (2) in the middle of training (epoch 50), the performance is equivalent for all models; and (3) in later training stages (epoch 100), using more difficult negatives is better. Notice the ordering of the lines in Fig.~\ref{fig:discussion}: $10\% < 25\% < 50\% < 100\%$  early in training while the inequalities are flipped at epoch 100.

\begin{figure}[h!]
  \centering
  \begin{subfigure}[b]{0.32\textwidth}
    \centering
    \includegraphics[width=\textwidth]{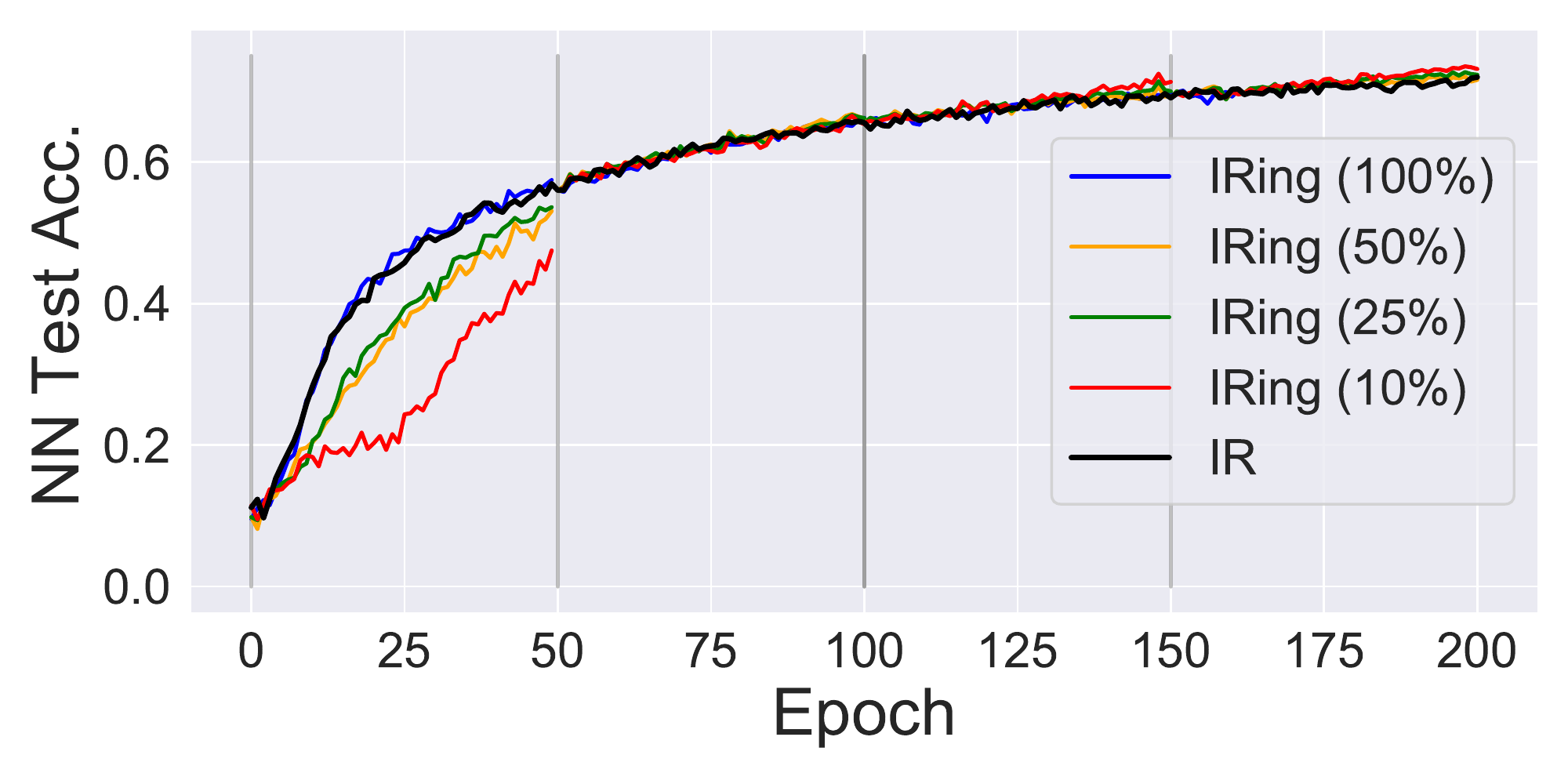}
    \caption{Effect of Hard Negatives}
  \end{subfigure}
  \begin{subfigure}[b]{0.32\textwidth}
    \centering
    \includegraphics[width=\textwidth]{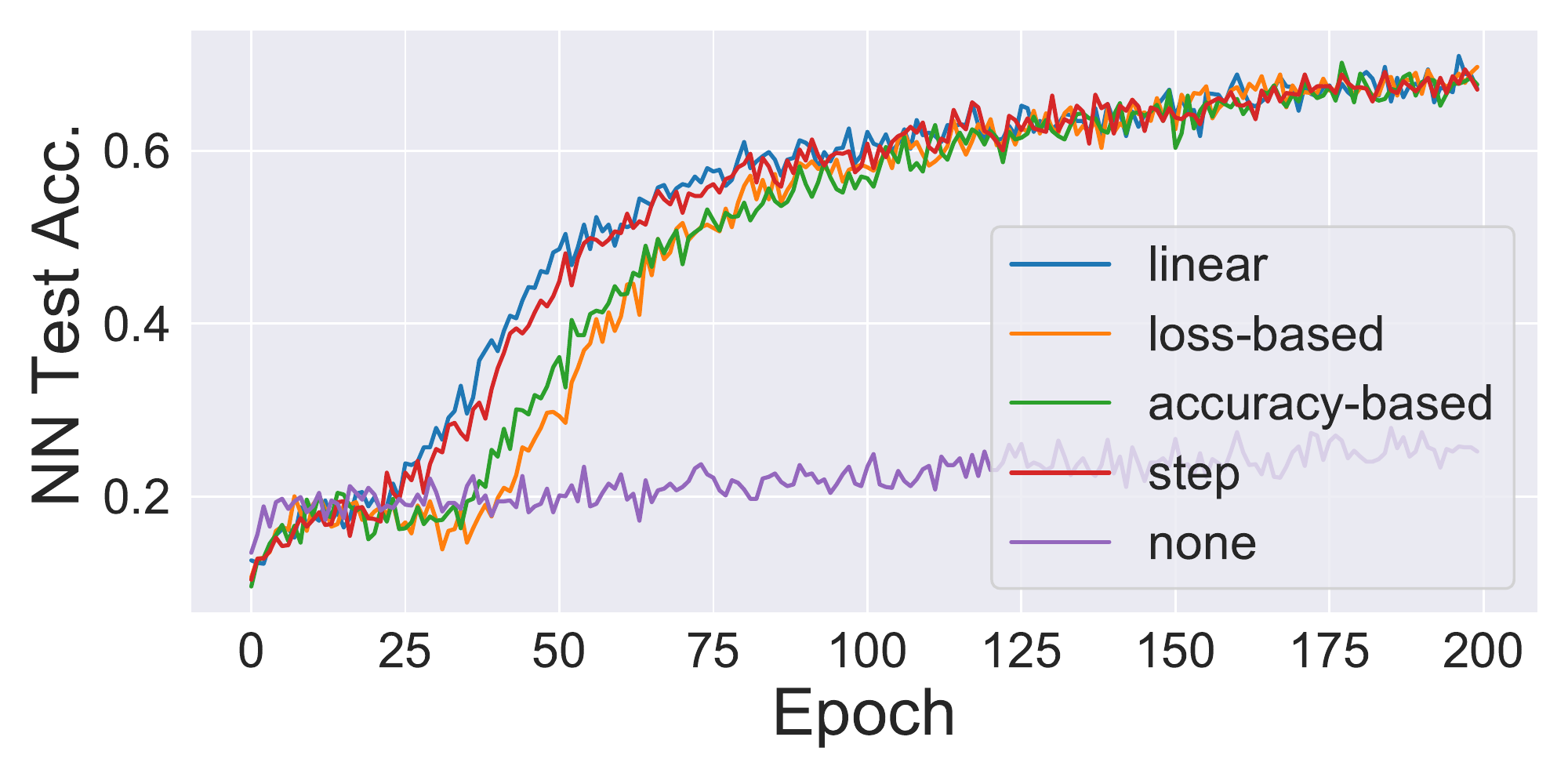}
    \caption{Annealing Policies}
  \end{subfigure}
  \begin{subfigure}[b]{0.32\textwidth}
    \centering
    \includegraphics[width=\textwidth]{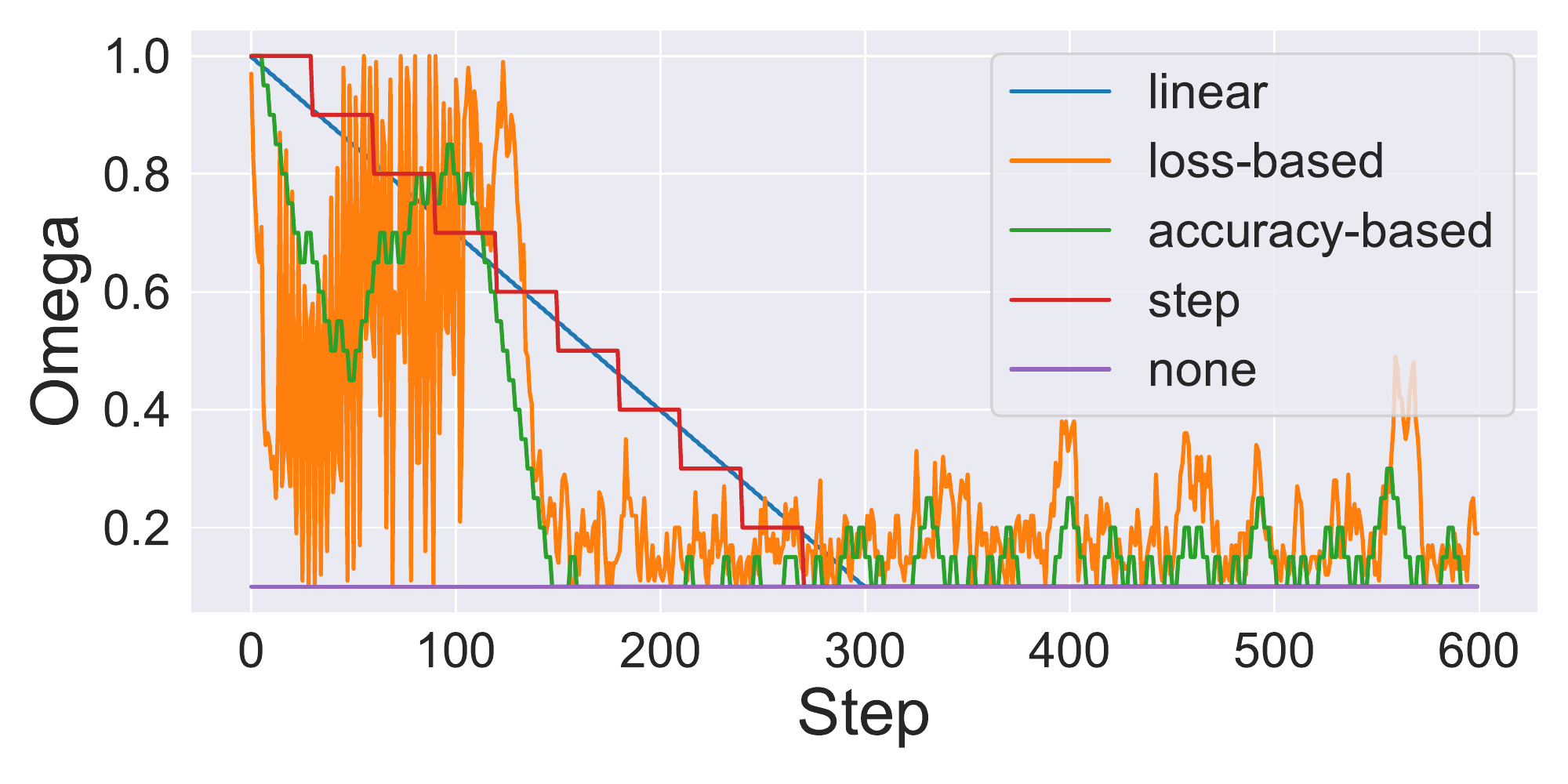}
    \caption{Annealing Thresholds $w_u$}
  \end{subfigure}
  \caption{(a) Embedding quality as a function of how similar negative samples are to the current instance in Ring Discrimination (the percentage represents the threshold $w_\ell$). (b,c) An exploration of difficult four policies for annealing Ring thresholds $w_\ell$.}
  \label{fig:discussion}
\end{figure}

\textbf{Exploring annealing policies.} Given that our experiments show annealing is important, there is a question of ``how to anneal''. In our experiments, we opted for a simple linear policy: slowly reducing $w_u$ from 100\% to 10\% in 100 epochs and maintaining it constant at 10\%  for the remaining epochs. Here, we briefly compare this to three other policies: a step function; an adaptive policy that lowers the threshold every epoch if the performance on a validation set increases, otherwise decreasing the threshold; and a similar adaptive policy that updates every step based on negative training loss. Fig.~\ref{fig:discussion}b compares the nearest neighbor test accuracies over 200 epochs of training IRing on CIFAR10 whereas Fig.~\ref{fig:discussion}b plots the threshold $w_u$.
We find that all the policies converge to roughly the same test accuracy, although linear and step policies appear to converge more quickly. From Fig.~\ref{fig:discussion}c, we observe that the adaptive methods naturally push the threshold down to 10\% (the lowest allowed threshold) around step 150, confirming our intuition that a smaller threshold later in training is desirable. Future work could explore more sophisticated policies.

\section{Related Work: Ring and Local Aggregation}
\label{sec:la}
Of the many algorithms listed above, we focus on Local Aggregation \citep{zhuang2019local}, or LA, which we conjecture to already be (implicitly) mining hard negatives.
While IR seeks to uniformly distribute embeddings, uniformity may not be optimal in all cases. For instance, images of the same class should intuitively be closer together than other images. The LA objective captures this intuition using a ``close neighbor set'' $C_i$ and ``background neighbor set'' $B_i$ conditioned on the current transformed instance $t(x_i)$. The background neighbor set contains the indices of elements in the dataset whose embeddings are closest to $g_\theta(t(x_i))$ in L$_2$ distance. The close neighbor set contains elements are same cluster as $t(x_i)$ using Kmeans assignments. Although not originally formulated in this manner, we can view the background neighbor set as being sampled from a variational distribution $q(B_i|t(x_i))$ with the lower threshold $w_\ell$ set to $0$ i.e. the ring is fully enclosed.  Now, writing LA in the notation of Eq.~\ref{eq:contrastive}, its objective is
\begin{equation}
  \mathcal{L}_{\text{LA}}(x_i; M) = \mathbf{E}_{t \sim p(t)} \mathbf{E}_{B_i \sim q(B_i|t(x_i))} \left[ \log \frac{\frac{1}{|C_i|}\sum_{j\in C_i} e^{g_\theta(t(x_i))^T M[j] / \tau} }{\frac{1}{|B_i|}\sum_{j' \in B_i} e^{g_\theta(t(x_i))^T M[j']/\tau}} \right].
  \label{eqn:la}
\end{equation}
Although Ring Discrimination and LA both mine hard negatives, LA additionally uses instances in the same KMeans cluster as positive views of $x_i$. Borrowing ideas from LA, we can explore several extensions of Ring Discrimination.
\begin{table}[h!]
\small
\centering
\begin{tabular}{l|c}
\toprule
Model & Top1 \\
\midrule
LA & 83.9 \\
IRCave & 84.0 \\
CMCCave & 87.2 \\
IRing (+$C_i$) & 84.3 \\
CMCRing (+$C_i$) & 87.8 \\
\bottomrule
\end{tabular}
\caption{Variants of Ring}
\label{table:la}
\end{table}
First, by ``Cave'' Discrimination (including IRCave ad CMCCave), we denote drawing negative samples from a CNCE distribution $q$ with a support restricted to the examples in the same KMeans clustering as the current instance (Note that such a definition falls under Theorem~\ref{thm:vince} as a particular choice for the restricted set $S_B$). Second, Ring (+$C_i$) instead, includes members of the KMeans clustering as positive views of $x_i$, like in LA --- here, negative samples are drawn as in regular Ring. Note that LA and IRing (+$C_i$) differ only by the lower threshold $w_\ell$, which is zero in the former and nonzero in the latter.
Table~\ref{table:la} shows promising results on CIFAR10 as these variations produce strong representations. This suggests that choosing good views and good negatives together can build even better contrastive algorithms.
\end{document}